\documentclass{neus2025}

\makeatletter
\def\input@path{{sections/}}
\makeatother

\usepackage{amsfonts} 
\usepackage{booktabs,makecell}

\usepackage{listings}
\usepackage{color}
\usepackage{subcaption}
\usepackage{tikz}
\usepackage{xcolor}
\usepackage{colortbl}

\usepackage{mysymbol}
\usepackage{ifthen}
\usepackage{diagbox}
\usepackage{multirow}
\usepackage{pifont}

\newtheorem{problem}{Problem}
\newtheorem{myexample}{Example}

\usepackage{bm}

\newcommand{\md}[1]{\bbF_{\text{#1}}}

\newcommand{\bhline}{\noalign{\hrule height 1.2pt}}
\newcommand{\xbhline}{\noalign{\hrule height 0.7pt}}

\newcommand{\modifycolor}{black}  
\newcommand{\mv}{\texttt{MV.jl}\ }

\graphicspath{{imgs/}}

\makeatletter
\newenvironment{cexample}[1]
  {\setcounter{myexample}{\getrefnumber{#1}}
   \renewcommand{\themyexample}{\ref{#1}}
   \begin{myexample}\textit{(continued)}}
  {\end{myexample}\addtocounter{myexample}{-1}}
\makeatother

\newcommand{\rate}[2]{\makecell{#1\%\\\scriptsize(#2)}}
\newcommand{\runtime}[2]{#1$\pm$#2}
\newcommand{\milpsize}[4]{#1$\times#2\pm$#3$\times#4$}
\newcommand{\baseline}{the baseline~\citep{luo2025certifying}}

\title[Robustness Verification with Joint Specifications]{From Decoupled to Coupled: Robustness Verification for Learning-based Keypoint Detection with Joint Specifications}

\usepackage{times}

\author{%
 \Name{Xusheng Luo} \Email{xushengl@andrew.cmu.edu}\\
 \addr Carnegie Mellon University, Pittsburgh, PA, USA
 \AND
 \Name{Changliu Liu} \Email{cliu6@andrew.cmu.edu}\\
 \addr Carnegie Mellon University, Pittsburgh, PA, USA
}

\begin{document}

\maketitle

\begin{abstract}
  Keypoint detection underpins many vision tasks, including pose estimation, viewpoint recovery, and 3D reconstruction, yet modern neural models remain vulnerable to small input perturbations. Despite its importance, formal robustness verification for keypoint detectors is largely unexplored due to high-dimensional inputs and continuous coordinate outputs. We propose the first coupled robustness verification framework for heatmap-based keypoint detectors that bounds the joint deviation across all keypoints, capturing their interdependencies and downstream task requirements. Unlike prior decoupled, classification-style approaches that verify each keypoint independently and yield conservative guarantees, our method verifies collective behavior. We formulate verification as a falsification problem using a mixed-integer linear program (MILP) that combines reachable heatmap sets with a polytope encoding joint deviation constraints. Infeasibility certifies robustness, while feasibility provides counterexamples, and we prove the method is sound: if it certifies the model as robust, then the keypoint detection model is guaranteed to be robust. Experiments show that our coupled approach achieves high verified rates and remains effective under strict error thresholds where decoupled methods fail.

\end{abstract}

\begin{keywords}
  	Keypoint Detection, Robustness Certification, Reachability Analysis
\end{keywords}




\section{Introduction}

Keypoint detection is a fundamental computer vision task that identifies distinctive image landmarks and underpins many downstream applications. Accurate keypoint localization enables pose estimation~\citep{sun2019deep}, viewpoint estimation~\citep{zhou2018starmap}, action recognition~\citep{hachiuma2023unified}, feature matching~\citep{sarlin2020superglue}, and 3D reconstruction~\citep{novotny2022keytr}. The performance of these higher-level tasks critically depends on the reliability of the detectors.

Deep neural networks now dominate keypoint detection due to their ability to learn rich feature representations. However, they remain vulnerable to distribution shifts and adversarial perturbations~\citep{goodfellow2014explaining}. Even small occlusions, lighting changes, or noise can cause significant keypoint mislocalization. Although techniques such as data augmentation~\citep{liu2020keypose}, adversarial training~\citep{zhu2021improving}, and specialized loss functions~\citep{cheng2019occlusion} improve empirical robustness, they provide no formal guarantees against worst-case perturbations.

Formal robustness verification for keypoint detectors remains largely unexplored. Existing verification research primarily targets image classification~\citep{zhou2024scalable,brix2024fifth}, where correctness is easier to define. In contrast, keypoint detection outputs continuous coordinates, requiring tolerance to bounded deviations rather than exact matches, which makes verification more challenging. Prior works~\citep{kouvaros2023verification,luo2025certifying} reduce the problem to independent classification-style checks per keypoint, overlooking their coupled effect on downstream tasks and often yielding conservative results. This limitation is particularly concerning in safety-critical domains such as robotics, autonomous driving, and aerospace.

We consider a common heatmap-based architecture that predicts one likelihood map per keypoint, with locations extracted via a \texttt{max} operation. Instead of verifying each keypoint independently, we specify a coupled robustness property that bounds the joint deviation across all keypoints, reflecting task-level requirements. We formulate verification as a MILP that combines heatmap reachable sets with a polytope describing allowable joint deviations, as depicted in Fig.~\ref{fig:overview}. If the MILP is infeasible, robustness is certified; otherwise, the property cannot be guaranteed. We prove this approach is sound while enabling formal robustness certification.

\begin{figure}
    \centering
    \includegraphics[width=1\linewidth]{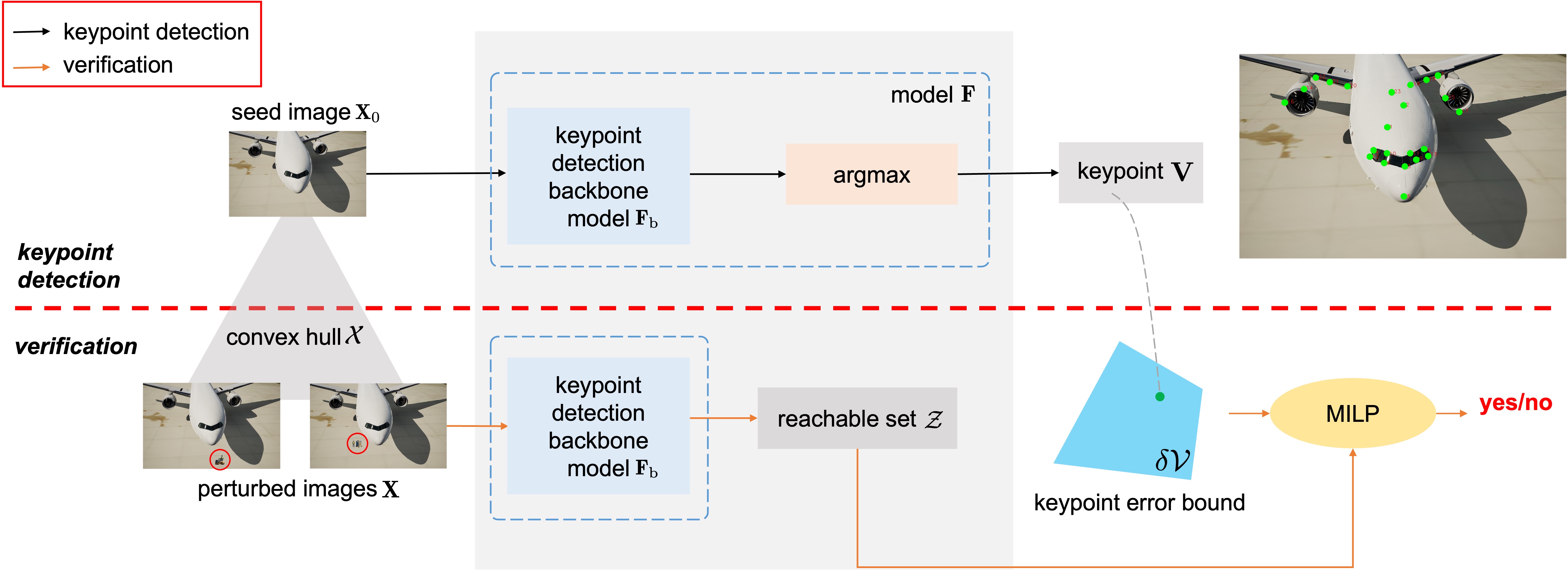}
    \caption{Overview of the keypoint detection pipeline and the proposed verification framework. A thick red dashed line divides keypoint detection (above) and verification (below). A seed image $\bbX_0$ where an airplane is parking at the airport is processed by the model to identify keypoints, which are marked as green dots. The verification framework takes as input the seed image $\bbX_0$ and a set of perturbed images $\bbX$ (with local perturbations indicated by red circles) that form the convex hull $\ccalX$, along with the keypoint error bound $\delta\ccalV$.
    By checking the feasibility of the MILP that is derived from the reachable set $\ccalZ$ of the model and the keypoint error bound $\delta \ccalV$, the  method returns whether the model is robust.}
    \label{fig:overview}
\end{figure}

\section{Related Work}

\subsection{Formal verification of neural networks}
The objective of verifying neural networks involves ensuring they meet certain standards of safety, security, accuracy, or robustness. This essentially means determining the truth of a specific claim about the outputs of a network based on its inputs. In recent years, there has been a significant influx of research in this area. For comprehensive insights into neural network verification, one can refer to~\cite{liu2021algorithms}. Verification techniques are generally divided into three main groups: reachability-based approaches, which perform a layer-by-layer analysis to assess network output range~\citep{gehr2018ai2,xiang2018output,tran2020nnv,choi2025reachability}; optimization methods, which seek to disprove the assertion~\citep{bastani2016measuring,tjeng2018evaluating,banerjee2024input}; and search-based strategies which combine with reachability analysis or optimization to identify instances that contradict the assertion~\citep{katz2019marabou,xu2020fast,wu2024marabou,duong2024harnessing}. In 2020, VNN-COMP~\citep{brix2023fourth} launched as a competition to evaluate the capabilities of advanced verification tools spanning a variety of tasks, including collision detection, image classification, dataset indexing, and image generation. However, these methods treat deep neural networks in isolation, concentrating on analyzing the input-output relationship.

Concurrently, there is research focused on the system-level safety of {\color{\modifycolor}closed-loop} cyber-physical systems (CPS) incorporating neural network components, particularly within the system and controls domain. They broadly fall into two categories. The first category~\citep{tran2019safety,dutta2019reachability,everett2021neural,ivanov2021verisig} focuses on ensuring the correctness of neural network-based controllers, taking their input from the structured outcomes of the state estimation module, regardless of whether the state estimation module is based on perception or not. Neural network controllers of this type generally consist of several fully connected layers, making them relatively straightforward to verify. The second category focuses on validating the closed-loop performance of vision-based dynamic systems that incorporate learning-based components. Among these, studies~\citep{sun2019formal,ivanov2020case,ivanov2021compositional,hsieh2022verifying,sun2022formal} examine LiDARs as the perception module, processed by {\color{blue}multi-layer} perceptrons (MLPs) with a few hidden layers. Other approaches, primarily applied to runway landing and lane tracking, deal with high-dimensional inputs from camera images, employing methods like approximate abstraction of the perception model~\citep{hsieh2022verifying}, contract synthesis~\citep{astorga2023perception}, simplified networks within the perception model~\citep{cheng2020towards,katz2022verification}, or a domain-specific model of the image formation process~\citep{santa2022nnlander}. 
Nevertheless, few studies directly handle camera-image inputs, owing to their unstructured, high-dimensional nature, in contrast with structured robot states such as position and velocity.

\subsection{Certification of keypoint detection}
The investigation of verification methods for keypoint detection is relatively limited.~\cite{talak2023certifiable} introduced a certifiable approach to keypoint-based pose estimation from point clouds by correcting keypoints identified by the model, ensuring the correctness guarantee of the pose estimation.~\cite{shi2023correct} expanded on this by integrating the correction concept with ensemble self-training. {\color{\modifycolor} Similarly, by propagating} the uncertainty in the keypoints to the object pose,~\cite{yang2023object} created a keypoint-based pose estimator for point clouds that is provably correct and is characterized by definitive worst-case error bounds. While these efforts focus on point clouds, in the image domain,~\cite{holmes2025sdprlayers} introduced an optimization layer for deep-learning networks that provides certifiably correct and differentiable solutions when the relaxation is tight, and demonstrated its application in detecting image keypoints for robot localization under challenging lighting conditions.
Of all these studies,~\cite{kouvaros2023verification,luo2025certifying} are the most closely related to our work, as they address the robustness verification of keypoint detection networks.  However, in contrast to their decoupled verification approaches, we verify all keypoints jointly.

\section{Background}
In this work, we represent scalars and scalar functions by italicized lowercase letters ($x$), vectors and vector functions by upright bold lowercase letters ($\bbx$), matrices and matrix functions by upright bold uppercase letters ($\bbX$), and sets and set functions with calligraphic uppercase letters ($\ccalX$). Let $\mbR$ and $\mbZ$ represent the sets of real and integer numbers, respectively.

\paragraph{Keypoint detection}\label{sec:bg_pose}
A common strategy to detect keypoints involves the use of heatmap regression, wherein ground-truth heatmaps are created by placing 2D Gaussian kernels atop each keypoint. The heatmap pixel values are interpreted as the likelihood of each pixel being a keypoint. These heatmaps are then used to guide the training through an $\ell_2$ loss. The detection network can be divided into two parts. A \textit{backbone} network, denoted by $\md{b}$, inputs a 2D image to produce heatmaps, one per keypoint, which is followed by an \texttt{argmax} operation for keypoint extraction. We refer to the argmax part as the \textit{head} network. The entire network is represented by
$   \bbV =  \bbF(\bbX) = \md{h}  \circ \md{b}(\bbX)$, 
where $\bbX \in \mbR^{H\times W\times C}$ represents a 2D RGB image with dimensions being $H\times W\times C$, and $\bbV \in \mbZ^{K\times 2}$ denotes the 2D coordinates of $K$ keypoints. Here, $\circ$ denotes function composition. In this paper, we impose no restrictions on the structure of the backbone model.
To enhance accuracy and robustness, it is often essential to preprocess the input image $\bbX$ before it is passed to the network, such as resizing and color normalization. Denote this preprocessing step by $\bbF_0$, leading to the equation $  \bbV =  \bbF(\bbX) = \md{h}  \circ \md{b} \circ \bbF_0(\bbX)$.  We omit the preprocessing step unless it is critical to consider it.

\paragraph{Verification of neural networks}
Consider a multi-layer neural network representing a function $\bbf$, which takes an input $\bbx \in  \ccalD_\bbx \subseteq \mbR^{d_0}$ and produces an output $\bby \in \ccalD_\bby \subseteq \mbR^{d_n}$, where $d_0$ is the input dimension, and $d_n$ is the output dimension. Any non-vector inputs or outputs are restructured into vector form. The verification process entails assessing the validity of the following input-output relationships defined by the function $\bbf$:
$
    \bbx \in \ccalX \Rightarrow \bby = \bbf(\bbx) \in \ccalY,
$
where sets, $\ccalX \subseteq \ccalD_\bbx $ and $\ccalY \subseteq \ccalD_\bby$, are referred to as input and output constraints, respectively. In the context of confirming the robustness of a classification network, the goal is to ascertain that all samples within a proximal vicinity of a specified input $x_0$ receive an identical classification label. Assuming the target label is $i^\ast \in \{1, \ldots, d_n\}$, the specification for verification is that $y_{i^\ast} > y_j$ for every $j$ not equal to $i^\ast$. The constraints on inputs and outputs are established accordingly:
$
\ccalX = \{\bbx \mid \|\bbx - \bbx_0\|_p \leq \epsilon\}, 
\ccalY = \{\bby \mid y_{i^\ast} > y_j, \; \forall j \neq i^\ast\},
$
where $\epsilon$ represents the maximum permissible deviation in the input space. Two pivotal attributes, \textit{soundness} and \textit{completeness}, are of critical importance. A verification algorithm is \textit{sound} if it only confirms the validity of a property when the property is indeed valid. It is \textit{complete} if it consistently recognizes and asserts the existence of a property whenever it is actually present. 

\section{Problem Formulation}
In this work, we consider specifications that jointly constrain allowable deviations across keypoints, offering a more general formulation than prior work~\citep{kouvaros2023verification,luo2025certifying} that introduces conservativeness by treating each keypoint independently. 

\begin{problem}\label{prob:certification}
Given a convex hull representation $\ccalX$ consisting of a seed image $\bbX_0$ and $n$ perturbed images, defined by the set of all their possible convex combinations, i.e.,
\begin{align}
 \ccalX = \left\{  \bbX \;\left\vert \; \bbX = \sum_{i = 0}^n \omega_i \bbX_i, \quad \text{s.t.} \; \omega_i \geq 0, \sum_{i=0}^n \omega_i  = 1\right.\right\}.
\end{align}
Assume there are $K$ keypoints, and let $\delta \bbv \in \mbZ^{2K}$ denote the 2D keypoint error vector. The set of allowable keypoint deviations is represented by a polytope $\delta \ccalV = \{ \delta \bbv \,|\, \bbP_\bbv \delta \bbv \leq \bbb_\bbv \}$, where $\bbP_\bbv$ and $\bbb_\bbv$ are matrices of suitable dimensions. The goal is to certify whether the keypoint detection network $\bbF =  \md{h}  \circ \md{b}$ is robust to any image within the set $\ccalX$. Mathematically,
\( \forall\, \bbX \in \ccalX, \;\bbv  - \bbv^* \in \delta \ccalV\)
for   $\bbv = \bbF(\bbX)$, 
where $\bbv^* \in \mbZ^{2K}$ is the ground-truth coordinates of keypoints for the input image.
\end{problem}


\section{Robustness as Feasibility of the MILP}\label{sec:milp}


{\color{black} In addressing Problem~\ref{prob:certification}, we identify two main challenges.} First, keypoint detection operates over two spaces: the heatmap space and the coordinate space, linked through the process of maximum value extraction. Under perturbations, any valid heatmap must yield a maximum whose location lies within the allowable coordinate set. Second, the allowable deviations of keypoints are interdependent. Our approach addresses both challenges simultaneously by formulating a MILP that attempts to falsify the condition, i.e., to identify a heatmap within the reachable set whose maximum falls outside the allowable coordinate set defined by the joint output specifications.

Let $\ccalZ$ denote the over-approximation of the reachable set of the backbone model, which can be obtained by various reachability analysis methods. Without loss of generality, we assume that this reachable set is a zonotope, a special type of convex polytope with a compact representation. In its matrix form, a zonotope is defined by a center $\bbC \in \mbR^{HW \times K}$ and a linear combination of a set of $m$ generators $\bbG \in \mbR^{HW \times K \times m}$. Mathematically, it is expressed as:
\begin{align}
\ccalZ = \left\{ \bbZ \in \mbR^{HW \times K} \ \middle|\ \bbZ = \bbC + \sum_{k=1}^m \alpha_k \bbG_{k}, \, \alpha_k \in [-1, 1] \right\}.    
\end{align}
where $\bbG_k$ is the $k$-th generator, and each heatmap is flattened into a vector of length $HW$. Any point $\bbZ$ within the reachable set $\ccalZ$ represents multi-channel heatmaps, which are used to predict keypoints by extracting the maximum value from each channel. The keypoint errors are classified as {\it in-bound} if they fall within the polytope $\delta \ccalV$. Otherwise, they are categorized as {\it out-of-bound}.

We formulate a MILP aiming to determine whether a point $\bbZ \in \ccalZ$, i.e., multi-channel heatmaps, within the reachable set exists such that the corresponding keypoint errors are out of bound. If no such point exists, i.e., the MILP is infeasible, it indicates that the keypoint detection is considered robust. Conversely, if out-of-bound keypoint errors are identified, the robustness result is deemed unknown since typically the reachable set $\ccalZ$ overapproximates the true reachable set. 
To this end, in Section~\ref{sec:reachable_set_keypoint_error}, we formulate constraints that encode whether a point $\bbZ$ lies within the reachable set $\ccalZ$ and whether a keypoint deviation $\delta \bbv$ violates the allowable keypoint deviations $\delta \ccalV$. In Section~\ref{sec:dynamic_indexing}, we further extract the pixel values of $\bbZ$ at the coordinates determined by the out-of-bound deviation $\delta \bbv$. If these pixels correspond to the maximal values in their respective channels, we identify a counterexample $\bbZ$, as the locations of these maximal activations yield keypoints outside the allowable bounds. We also propose strategies in Section~\ref{sec:pruning} to reduce the size of the resulting MILP formulation to improve computational efficiency. We use Example~\ref{exmp:comparison} in Appendix~\ref{app:example} as an example to demonstrate the proposed approach, and prove the soundness of our method in Section~\ref{app:theory}.

\subsection{A point inside the reachable set and out-of-bound keypoint errors}\label{sec:reachable_set_keypoint_error}

Let $\ccalS = \{1,\ldots, HW\}$ represent flattened indices for each heatmap, and let $\bbZ \in \mbR^{HW\times K}$ denote the matrix of continuous variables. $\bbZ$ is contained in the zonotope if there exists $\alpha_k \in [-1, 1]$:
\begin{align}\label{eq:zonotope}
\bbZ &= \bbC + \sum_{k=1}^m \alpha_k \bbG_k.
\end{align}


For the $i$-th keypoint, let $(v^*_{2i-1}, v^*_{2i}) \in \mbZ^2$ denote the ground-truth 2D coordinates. The condition that $i$-th keypoint’s deviation $(\delta v_{2i-1}, \delta v_{2i})$ from its true position remains within the image boundaries can be expressed as:
\begin{subequations}\label{eq:image_bound}
\begin{alignat}{2}
-(v^*_{2i-1} - 1) &\;\leq\;  \delta v_{2i-1} &&\;\leq\; H - v^*_{2i-1}, \quad \text{for } i = 1, \ldots, K, \\
-(v^*_{2i} - 1)   &\;\leq\;  \delta v_{2i}   &&\;\leq\; W - v^*_{2i},   \quad \text{for } i = 1, \ldots, K.
\end{alignat}
\end{subequations}

Define $\epsilon$ and $M$ as very small ($10^{-6}$) and very large ($10^6$) positive constants, respectively.  Additionally, let $\bbr \in \{0, 1\}^r$ be a binary vector of size $r$, equal to the number of rows in the matrix $\bbP_\bbv$. Using the Big-M method, the condition that $\delta \bbv$ lies outside the polytope of keypoint errors $\delta \ccalV$ can be encoded as follows:

\begin{subequations}\label{eq:polytope_containment}
\hspace{1em}
\begin{minipage}{0.3\textwidth}
\begin{equation}
\sum_{i=1}^r r_i \geq 1 \label{eq:polytope_containment_b}
\end{equation}
\end{minipage}
\hfill
\begin{minipage}{0.45\textwidth}
\begin{equation}
\bbP_\bbv \delta \bbv \geq b_\bbv + \bbepsilon - M(\mathbf{1} - \bbr). \label{eq:polytope_containment_c}
\end{equation}
\end{minipage}
\end{subequations}

\noindent Here, $\bbepsilon$ denotes a vector with all entries equal to $\epsilon$, and $r_i = 1$ indicates that the point $\delta \bbv$ lies outside the $i$-th face of the polytope $\delta \ccalV$. To tighten constraint \eqref{eq:polytope_containment_c}, define $M_i$ as the element-wise maximum value of the vector $-\bbP_{\bbv}\,\delta\bbv + b_{\bbv} + \bbepsilon$, that is,
\(
M_i \;=\; \max_{\delta\bbv\;\text{s.t.}\;\eqref{eq:image_bound}}
\left[-\bbP_{\bbv}\,\delta\bbv + b_{\bbv} + \bbepsilon\right]_i,
\)
where $[\cdot]_i$ extracts the $i$-th component. Assemble these values into the diagonal matrix
$\bbM \!=\! \mathrm{diag}(M_1,\dots,M_r)$.  Now constraint \eqref{eq:polytope_containment_c} can be rewritten as
\(
\bbP_{\bbv}\,\delta\bbv \;\ge\; b_{\bbv} + \bbepsilon - \bbM\,(\mathbf{1}-\bbr).
\)

\subsection{Dynamic indexing and and maximality check under keypoint deviations}\label{sec:dynamic_indexing}
 

Given a keypoint deviation vector $\delta \bbv$, this section describes the procedure for retrieving values from a point $\bbZ \in \ccalZ$ using the index $\bbv_0 + \delta \bbv$, and verifying whether the corresponding projected values of $\bbZ$ are element-wise maximal. Since $\delta \bbv$ is itself composed of variables, we refer to this as the dynamic indexing problem, as depicted in Fig.~\ref{fig:dynaimc-indexing}.
\begin{figure}[!h]
    \centering
    \includegraphics[width=\linewidth]{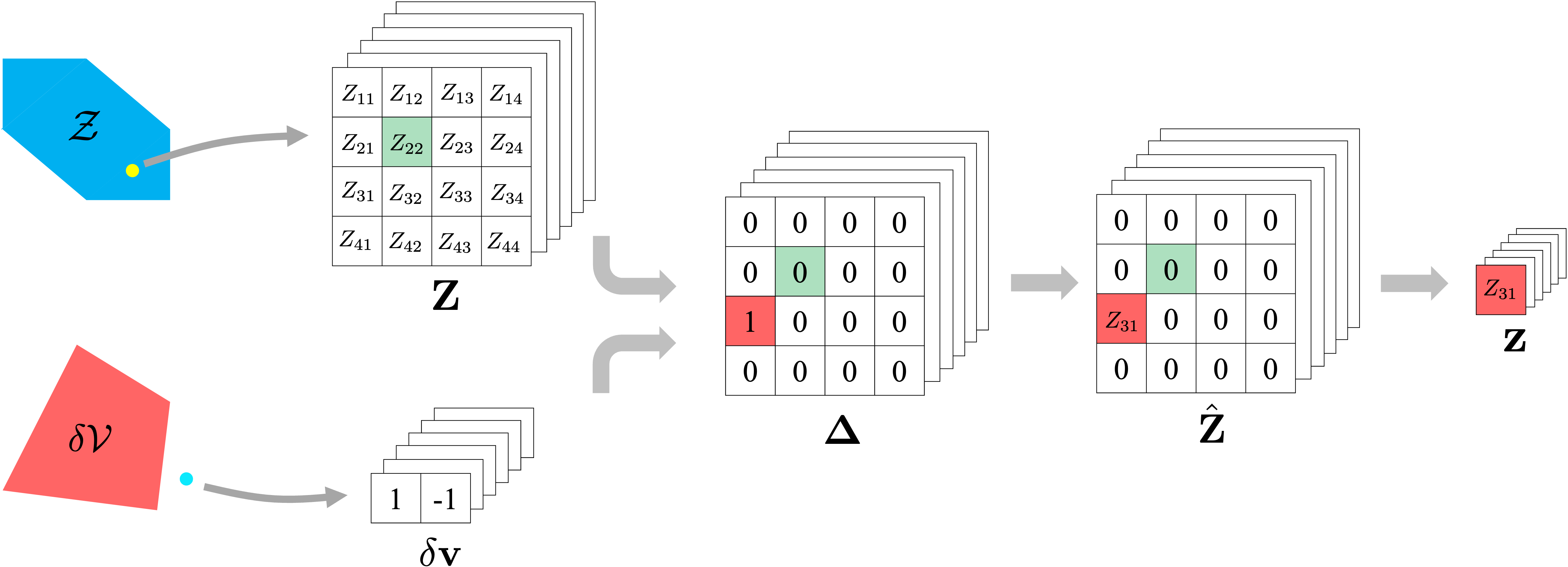}
    \caption{An illustration of dynamic indexing for the first keypoint: the green grid indicates the ground-truth keypoint location, while the red grid represents the perturbed keypoint position. The index deviation (1, -1) retrieves the value $Z_{31}$.}
    \label{fig:dynaimc-indexing}
\end{figure}

\paragraph{Binary indicator of $\bbZ$ at indexed location $(\bbv_0 + \delta \bbv)$.}
Let $\bbDelta \in \{0, 1\}^{HW \times K}$, with the same dimensions as $\bbZ$, denote a matrix of binary variables, where the position indexed by $(\bbv_0 + \delta \bbv)$ is set to 1, and all others are set to 0. The perturbed coordinates of the $i$-th keypoint are expressed as $(h_i, w_i) = (v^*_{2i-1} + \delta v_{2i-1}, v^*_{2i} + \delta  v_{2i})$. For the $i$-th keypoint, $\bbDelta$ being the binary indicator of $\bbZ$ is formulated as:

\begin{subequations}\label{eq:delta}
\begin{minipage}{0.3\textwidth}
\begin{equation}
\sum_{j \in \ccalS} \Delta_{ji} = 1,
\end{equation}
\end{minipage}
\hfill
\begin{minipage}{0.45\textwidth}
\begin{equation}
(h_i - 1) W + w_i  = \sum_{j \in \ccalS} j \Delta_{ji}.
\end{equation}
\end{minipage}
\end{subequations}

\paragraph{Extracting values indexed by $(\bbv_0 + \delta \bbv)$.}

Define $\underline{\bbM} \in \mbR^{HW \times K}$ and $\overline{\bbM} \in \mbR^{HW \times K}$ as the element-wise lower and upper bounds of the point $\bbZ$ within the zonotope $\ccalZ$, respectively. Specifically, the lower bound is calculated by subtracting the absolute sum of contributions from all generators from the center value:
\(
\underline{M}_{ji} = C_{ji} - \sum_{k=1}^m \left| G_{jik} \right|.
\)
The upper bound is determined by adding the absolute sum of contributions from all generators to the center value:
\(
\overline{M}_{ji} = C_{ji} + \sum_{k=1}^m \left| G_{jik} \right|.
\)

Let $\hhatbbZ \in \mbR^{HW \times K}$ be a matrix of continuous variables where each element equals the corresponding element in $\bbZ$ if the $\bbDelta$ value at the same location is 1, and is zero otherwise. This relationship is expressed as follows for $j \in \ccalS$:

\begin{subequations}\label{eq:z_delta}
\begin{minipage}{0.3\textwidth}
\begin{align}
\hhatZ_{ji} & \geq \underline{M}_{ji} \Delta_{ji}, \label{eq:z_delta_1} \\
\hhatZ_{ji} & \leq \overline{M}_{ji}  \Delta_{ji}, \label{eq:z_delta_2}
\end{align}
\end{minipage}
\hfill
\begin{minipage}{0.45\textwidth}
\begin{align}
\hhatZ_{ji} & \leq Z_{ji} - \underline{M}_{ji} (1 - \Delta_{ji}), \label{eq:z_delta_3}\\
\hhatZ_{ji} & \geq Z_{ji} - \overline{M}_{ji} (1 - \Delta_{ji}). \label{eq:z_delta_4}
\end{align}
\end{minipage}
\end{subequations}

If $\Delta_{ji} = 0$, constraints~\eqref{eq:z_delta_1} and~\eqref{eq:z_delta_2} enforce that $\hhatZ_{ji} = 0$. Conversely, if $\Delta_{ji} = 1$, constraints~\eqref{eq:z_delta_3} and~\eqref{eq:z_delta_4} ensure that $\hhatZ_{ji} = Z_{ji}$. Let $\bbz \in \mbR^K$ represent the set of continuous variables corresponding to values in $\bbZ$ indexed by the perturbed coordinates $\bbv^*+\delta \bbv$. We have:
\begin{align}\label{eq:sum}
z_i &= \sum_{j \in \ccalS} \hhatZ_{ji}.
\end{align}

The feasible set of the variable $\bbz$, defined by constraints~\eqref{eq:polytope_containment} through~\eqref{eq:sum}, corresponds to a lower-dimensional projection of the reachable set, determined by out-of-bound keypoint errors. We refer to this as the {\it out-of-bound projection}, whereas projections associated with admissible keypoint deviations are termed {\it in-bound projections}.

\paragraph{Finding a counterexample.}
Let $\hat{\ccalZ}_{\text{in}} \subseteq \mbR^{K}$ represent the sets of in-bound projections. The objective is to determine whether there exists a $\bbz$ such that it is element-wise larger than any point $\bbz_{\text{in}} \in \hat{\ccalZ}_{\text{in}}$. Focusing on the $i$-th keypoint, this means the $i$-th component $z_i$ must exceed all possible values of $\bbz_{\text{in}}^i$. To compute all possible values of $\bbz_{\text{in}}^i$, we project the polytope of keypoint errors $\delta \ccalV$ onto the $i$-th keypoint, denoting the resulting projected polytope as $\delta \ccalV_i$. By enumerating all pairs of integer values $(\delta v_{2i-1}, \delta v_{2i})$ within this polytope, we obtain the set of in-bound keypoint errors for the $i$-th keypoint. These integer pairs must also satisfy the image boundaries constraint~\eqref{eq:image_bound}. Each pair of in-bound errors is then added to the ground-truth coordinate $(v^*_{2i-1}, v^*_{2i})$ to compute the perturbed coordinates, and we denote the resulting set of flattened in-bound indices as $\ccalS_{\text{in}}^i$. Finally, the condition that $z_i$ is greater than all possible values of $\bbz_{\text{in}}^i$ can be expressed as:
\begin{align}\label{eq:element-wise-larger}
z_i \geq Z_{ji}, \quad \forall j \in \ccalS_{\text{in}}^i.
\end{align}

\subsection{Pruning the size of MILP}\label{sec:pruning}

Constraint~\eqref{eq:element-wise-larger} ensures that $z_i$ exceeds the largest value within the set $\ccalS_{\text{in}}^i$. To reduce the number of variables in constraint~\eqref{eq:element-wise-larger}, we aim to minimize the cardinality of $\ccalS_{\text{in}}^i$ by eliminating flattened indices for which another flattened index has a minimum value exceeding its maximum value. This is formally defined as:
\(
\ccalS^{i,-}_{\text{in}} = \left\{ j \in \ccalS_{\text{in}}^i \ |\  \exists j’ \in \ccalS_{\text{in}}^i\ \text{and} \ j' \not= j, \ \underline{M}_{j’i} \geq \overline{M}_{ji} \right\}.
\)
We then define a refined set of in-bound indices as $\ccalS^{i,*}_{\text{in}} = \ccalS_{\text{in}}^i \setminus \ccalS^{i,-}_{\text{in}}$, replacing $\ccalS_{\text{in}}^i$ with $\ccalS^{i,*}_{\text{in}}$ in constraint~\eqref{eq:element-wise-larger}.

Similarly, constraints~\eqref{eq:delta}-\eqref{eq:sum} relate to the out-of-bound indices. The candidate set of out-of-bound indices is defined as $\ccalS_{\text{out}}^i = \ccalS \setminus \ccalS_{\text{in}}^{i,-}$. From this set, we further remove indices where an in-bound index exists such that its minimum value is no less than the maximum value of the out-of-bound index. This is described as:
\(
\ccalS^{i,-}_{\text{out}} = \left\{ j \in \ccalS_{\text{out}}^i \ | \ \exists j’ \in \ccalS_{\text{in}}^i \ \text{and} \ j' \not= j, \ \underline{M}_{j’i} \geq \overline{M}_{ji} \right\}.
\)
The pruned set of out-of-bound indices is then defined as $\ccalS^{i,*}_{\text{out}} = \ccalS_{\text{out}}^i \setminus \ccalS^{i,-}_{\text{out}}$. Finally, we refine constraint~\eqref{eq:zonotope},~\eqref{eq:delta},~\eqref{eq:z_delta} and~\eqref{eq:sum} by replacing $\ccalS$ with $\ccalS^{i,*}_{\text{in}} \cup \ccalS^{i,*}_{\text{out}}$, as the original $\ccalS$ encompasses both in-bound and out-of-bound indices.

\section{Theoretical Analysis}\label{app:theory}


\begin{theorem}[Soundness]
Our approach is sound: if it certifies the model as robust, then the keypoint detection model is guaranteed to be robust.
\end{theorem}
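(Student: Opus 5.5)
We prove the statement by contraposition. Assume the model is not robust: there is an image $\bbX \in \ccalX$ with $\bbv = \bbF(\bbX)$ and $\delta\bbv := \bbv - \bbv^* \notin \delta\ccalV$. From this image we construct a feasible assignment of every MILP variable, including after the pruning of Section~\ref{sec:pruning}. Hence whenever the MILP is infeasible, which is exactly when our method certifies robustness, no such $\bbX$ can exist.

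\textbf{Reachable set and out-of-bound constraints.}
\begin{itemize}
\item Let $\bbZ = \md{b}(\bbX)$. Since the reachability analysis over-approximates, $\bbZ \in \ccalZ$, so some $\alpha_k \in [-1,1]$ satisfy \eqref{eq:zonotope}. Consequently $\underline{M}_{ji} \le Z_{ji} \le \overline{M}_{ji}$ for all $j,i$.
\item Each keypoint $\bbv$ is an argmax location of its channel, so it is a pixel in the image. Hence $\delta\bbv$ is integer and satisfies \eqref{eq:image_bound}.
\item Because $\delta\bbv \notin \delta\ccalV$, some row $i$ has $[\bbP_\bbv\delta\bbv]_i > b_i$. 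Set $r_i = 1$ for the violated rows and $r_i = 0$ otherwise.
\item For $r_i = 0$, constraint \eqref{eq:polytope_containment_c} holds because $M_i$ was chosen as the maximum of $-[\bbP_\bbv\delta\bbv]_i + b_i + \epsilon$ over the box \eqref{eq:image_bound}, and $\delta\bbv$ lies in that box.
\item For $r_i = 1$, we need $[\bbP_\bbv\delta\bbv]_i \ge b_i + \epsilon$. This strict-margin step is delicate. If $\bbP_\bbv$ and $\bbb_\bbv$ are integer (or rational with bounded denominators), a violation at integer $\delta\bbv$ exceeds $b_i$ by a fixed positive gap. We therefore assume, as implicit in the paper, that $\epsilon$ is no larger than this minimal gap.
\end{itemize}

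\textbf{Indexing constraints.}
\begin{itemize}
\item Set $\Delta_{ji} = 1$ exactly at $j = (h_i - 1)W + w_i$, the flattened index of the predicted keypoint $i$, and $0$ elsewhere. Then \eqref{eq:delta} holds.
\item Set $\hhatZ_{ji} = Z_{ji}\Delta_{ji}$. Using $\underline{M} \le Z \le \overline{M}$, a case check on $\Delta_{ji} \in \{0,1\}$ verifies \eqref{eq:z_delta}.
\item Then $z_i = Z_{j^*_i i}$ with $j^*_i$ the argmax index, so \eqref{eq:sum} holds.
\item Since $z_i$ is the channel maximum, $z_i \ge Z_{ji}$ for every $j$, in particular for all $j \in \ccalS_{\text{in}}^i$. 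This gives \eqref{eq:element-wise-larger}.
\end{itemize}

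\textbf{Pruning.} The main obstacle is showing that pruning discards no index the witness needs. The non-pruned constraints hold verbatim for the restricted index sets, so two things must be checked.
\begin{enumerate}
\item The argmax index $j^*_i$ must survive in $\ccalS^{i,*}_{\text{in}} \cup \ccalS^{i,*}_{\text{out}}$. Suppose it were pruned. Then some other index $j'$ would satisfy $\underline{M}_{j'i} \ge \overline{M}_{j^*_i i}$, giving $Z_{j'i} \ge Z_{j^*_i i}$. Since $j^*_i$ is the channel maximum, the two values are equal, so $j'$ is also a maximizer. We resolve such ties by choosing the maximizer the argmax would report, assuming a consistent tie-breaking rule. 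Strictly, some care is needed when ties occur at in-bound and out-of-bound locations simultaneously; we treat ties as measure-zero or broken consistently with the head $\md{h}$.
\item Replacing $\ccalS_{\text{in}}^i$ by $\ccalS^{i,*}_{\text{in}}$ only removes constraints from \eqref{eq:element-wise-larger}, so feasibility is preserved.
\end{enumerate}

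The assignment satisfies every constraint, so the MILP is feasible, which contradicts the certification.
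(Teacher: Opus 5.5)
Your proof is correct and is the paper's argument run as a contrapositive. The paper asserts that MILP infeasibility rules out every $\bbZ\in\ccalZ$ whose channel-wise maxima give an out-of-bound $\delta\bbv$, then invokes $\md{b}(\ccalX)\subseteq\ccalZ$; you check that step explicitly by building a feasible assignment from $\md{b}(\bbX)$.

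Your version is more careful than the paper's, which never addresses three points you handle:
\begin{itemize}
\item the pruned index sets;
\item the $\epsilon$-margin, where finiteness of the integer box \eqref{eq:image_bound} already gives a positive minimal violation, so your rationality assumption is unnecessary;
\item exact ties at the non-strict pruning threshold $\underline{M}_{j'i}\ge\overline{M}_{ji}$, which are a genuine degeneracy of the method rather than a flaw in your reasoning.
\end{itemize}
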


\begin{proof}
The claim follows directly from the MILP formulation. Infeasibility of the MILP implies that there exists no point $\bbZ \in \ccalZ$ for which the maximum value in each heatmap results in out-of-bound indices. Equivalently, every point in $\ccalZ$ yields keypoint deviations that remain within the acceptable output set $\delta \ccalV$. Since $\ccalZ$ is an over-approximation of the model's true reachable set, we conclude that the model is robust.
\end{proof}

\begin{remark}
Note that if our approach fails to certify robustness, it does not necessarily mean that the model is non-robust to the given perturbation. The failure may result from the conservativeness of the reachable set over-approximation used in the verification process.
\end{remark}

\section{Evaluation Results}


\paragraph{Dataset}
We adopt the keypoint-based pose estimation task presented in~\baseline, which aims to estimate the pose of an airplane parked at an airport relative to a camera. The estimation pipeline follows a two-stage process: first, keypoints are detected in the input image; then, a Perspective-n-Point (PnP)-based nonlinear optimization algorithm computes the pose by leveraging known 3D-2D correspondences of keypoints. In this case, 23 keypoints are strategically placed across the airplane’s surface to provide comprehensive coverage of the aircraft’s geometry (marked as green dots in Fig.~\ref{fig:overview}). The dataset comprises 7320 RGB images of airplanes, each with a resolution of $1920 \times 1200$ pixels.

\paragraph{Backbone model}
The CNN-based model consists of five convolutional layers followed by five deconvolutional layers, designed to process input images of size 64$\times$64$\times$3. In total, the network comprises 39 layers and contains approximately 6.57$\times 10^5$ trainable parameters.


\paragraph{Local object occlusions}
From a dataset of over 7000 images, we randomly select 200 seed images and apply either local or global perturbations to generate the evaluation set.
We constructed a set of 40 realistic semantic disturbances representing common airport elements such as personnel and vehicles. To generate perturbed images, we randomly selected 20 objects from this set to serve as patches, each up to 150 pixels in size, and randomly placed one patch per image on the seed images (local objects are highlighted with red circles in Fig.~\ref{fig:overview}). Perturbations were categorized as overlapping or non-overlapping, depending on whether the added patch overlaps with any part of the airplane in the image. Note that each convex hull used in analysis contains only overlapping or only non-overlapping images. This convex hull representation, introduced in~\baseline, enables modeling of realistic semantic perturbations such as people appearing near the airplane (Fig.~\ref{fig:overview}), rather than relying on randomly injected Gaussian pixel noise.
In total, we generated 4000 perturbed images: 893 classified as overlapping and 3107 as non-overlapping. To evaluate the impact of semantic disturbances on robustness, we varied the number of perturbed images from 1 to 4 to form the convex hull $\ccalX$. These images were randomly sampled, enabling a controlled study of how increasing semantic complexity influences system performance and robustness guarantees.


\paragraph{Global perturbations}
We change each pixel's value through two types of global perturbations: brightness and contrast. For brightness, a variation value $b \in \mathbb{Z}$ is applied such that each pixel's value increases by $b$, that is, $I' = ~\texttt{clip}(I + b)$, where $I$ represents the original pixel values, $I'$ the new pixel values, and $~\texttt{clip}$ ensures the values remain within the range [0, 255]. For contrast perturbation, a variation value $c \in \mathbb{R}$ adjusts each pixel's value by a percentage $c$, formulated as $I' = ~\texttt{clip}(I \times (1 + c))$.  The convex hull is constructed as follows. For a positive value $b \in \mathbb{Z}_+$, two perturbed images are created for $b$ and $-b$, respectively. These images act as vertices of the convex hull. Along with the seed image, this approach facilitates verification of the model's robustness to any brightness variation within the range $[-b, b]$. The same methodology applies to contrast variations $c \in \mathbb{R_+}$. We examine the effects for $b$ values of $\{1, 2\}$ and $c$ values of $\{5\times 10^{-4}, 5\times 10^{-3}, 1\times 10^{-2}\}$.

\paragraph{Output specification}

The system requirement is specified in terms of pose error thresholds, given by $\bm{\epsilon}_r = \alpha [10^\circ, 10^\circ, 10^\circ]$ for rotation and $\bm{\epsilon}_t = \alpha [4, 4, 20]$ in meter for translation, where the scalar parameter $\alpha$ controls the overall tolerance level. Following~\baseline, through a linear approximation, the corresponding keypoint error bounds can be derived from the pose error bounds, allowing the keypoint deviation constraints to also be expressed as a polytope. As $\alpha$ increases, the allowable keypoint deviations expand accordingly.

\paragraph{Metrics}\label{sec:verif_res}
We aim to answer {\color{\modifycolor}two} key questions: (1) How computationally efficient is the resulting neural network verification problem? (2) How accurate is the proposed certification method for robust pose estimation? We evaluate performance using two metrics: verification time and verified rate. Verification time statistics, including the mean and standard deviation, are computed for each tested seed image under specific perturbations.  The verified rate is defined as the proportion of cases in which the verification algorithm successfully certifies robustness at the keypoint level, relative to the total number of cases where the seed images yield acceptable keypoint errors.

\paragraph{Empirical verified rate}
Given a clean image and a specific type of perturbation, we apply a testing-based approach~\citep{tian2018deeptest} to compute the empirical verified rate, which acts as an upper bound for the verified rate obtained through the verification method. Specifically, we randomly sample 100 instances within the given convex hull and run them through the neural network to check whether the keypoint prediction error for each instance satisfies the output specification. If all 100 instances meet the specification, the image is considered empirically robust.

\subsection{Results}

\paragraph{Solver setup} We employ the verification toolbox~\texttt{ModelVerification.jl} (\texttt{MV})~\citep{wei2025modelverification}.~~\mv is the state-of-the-art verifier that supports a wide range of verification algorithms and is the most user-friendly to extend. It follows a branch-and-bound strategy to divide and conquer the problem efficiently. Two parameters guide this process:~\texttt{split\_method} determines the division of an unknown branch into smaller branches for further refinement, and~\texttt{search\_method} dictates the approach to navigating through the branch. We set~\texttt{search\_method} to use breadth-first search and~\texttt{split\_method} to bisect the branch.  The computing platform is a Linux server equipped with dual Intel CPUs, each with 24 cores running at 2.20 GHz and a total of 376 GB of memory. The server includes eight NVIDIA RTX A4000 GPUs, each with 16 GB of memory.


\paragraph{Local object occlusions}

The verified rates for non-overlapping images are presented in Table~\ref{tab:rate_local_pertur}, with each cell displaying both the fraction and the corresponding percentage.   Under the same keypoint error thresholds (i.e., fixed $\alpha$), our method consistently outperforms~\baseline\ in terms of verified rate,  as shown by its smaller gap relative to the empirical verified rate obtained from testing. In contrast,~\baseline\ fails to verify any image when $\alpha \leq 0.5$, highlighting the conservativeness of their approach. Notably, our method maintains a relatively stable verified rate as the number of local objects ($m$) increases, suggesting that background perturbations have limited impact on model robustness. Moreover, for a fixed number $m$ of local objects, the verified rate using our method remains steady across $\alpha$ values from 1.0 to 0.5 and is close to the empirical verified rate, but drops significantly as $\alpha$ drops from 0.2 to 0.1. This indicates that tighter keypoint error bounds increase the influence of over-approximation, affecting verification performance under stricter accuracy requirements. 

Similarly, Table~\ref{tab:rate_local_pertur_corrupt} reports the verified rates for overlapping images. Unlike the non-overlapping case, for a given $\alpha$, all fractions share the same denominator, since they correspond to the same set of seed images yielding acceptable keypoint errors, regardless of the number of local perturbations ($m$). In this table, the denominator decreases as $m$ increases because the 20 objects were randomly placed on the images. Because the airplane occupies a relatively small portion of the image, these objects are more often disjoint from it, resulting in sufficient data for non-overlapping images across all values of $m$ from 1 to 4. In contrast, overlapping images require all samples to contain overlaps, leading to fewer available images as $m$ increases. Regarding verified rates,  while our method continues to outperform~\baseline\ , both approaches achieve consistently lower verified rates compared to non-overlapping images. Furthermore, distinct from the non-overlapping case, our method’s verified rate declines rapidly as the number of local objects ($m$) increases. This suggests that the neural network focuses heavily on the airplane, rendering it more vulnerable to perturbations directly affecting it.

\renewcommand{\arraystretch}{2}
 \begin{table}[!t]
     \centering
      \resizebox{\textwidth}{!}{
     \begin{tabular}{c|ccc|ccc|ccc|ccc}
     \bhline
       \multirow{2}{*}{$m$} & \multicolumn{3}{c|}{$\alpha=1.0$} & \multicolumn{3}{c|}{$\alpha=0.5$} & \multicolumn{3}{c|}{$\alpha=0.2$} & \multicolumn{3}{c}{$\alpha=0.1$} \\
        \cline{2-4} \cline{5-7} \cline{8-10} \cline{11-13}
        & testing  & ours & baseline & testing & ours & baseline  & testing & ours & baseline  & testing & ours & baseline\\
       \hline
       \rowcolor{gray!20}  
       1 & \rate{100.0}{200/200}  & \rate{99.5}{199/200} & \rate{47.5}{95/200} &  \rate{100.0}{198/198} & \rate{99.0}{196/198} & \rate{0.0}{0/198} & \rate{100.0}{190/190}  & \rate{71.6}{136/190} & \rate{0.0}{0/190} & \rate{88.7}{86/97}  & \rate{10.3}{10/97} & \rate{0.0}{0/97} \\
       2 &  \rate{100.0}{200/200} &\rate{99.0}{198/200} & \rate{52.5}{105/200} &   \rate{100.0}{198/198} &\rate{100.0}{198/198} & \rate{0.0}{0/198} & \rate{98.4}{187/190}  &\rate{72.1}{137/190} & \rate{0.0}{0/190} & \rate{89.7}{87/97}  & \rate{8.2}{8/97} & \rate{0.0}{0/97} \\
       \rowcolor{gray!20}  
       3 & \rate{100.0}{200/200} & \rate{99.0}{198/200} & \rate{53.0}{106/200}        &   \rate{100.0}{198/198}  & \rate{99.0}{196/198} & \rate{0.0}{0/198} & \rate{99.4}{189/190} &\rate{67.4}{128/190} & \rate{0.0}{0/190} & \rate{88.7}{86/97}  &\rate{5.1}{5/97} & \rate{0.0}{0/97} \\
       4 & \rate{100.0}{200/200}  &\rate{97.0}{194/200} & \rate{53.0}{106/200}         &  \rate{100.0}{198/198} & \rate{98.0}{194/198} & \rate{0.0}{0/198} &   \rate{99.5}{189/190} &\rate{62.1}{118/190} & \rate{0.0}{0/190} &  \rate{90.7}{88/97} &\rate{7.2}{7/97} & \rate{0.0}{0/97} \\
       \bhline
     \end{tabular}
     }
      \caption{Statistical results on verified rate for non-overlapping local perturbations.}
     \label{tab:rate_local_pertur}
     
\end{table}

\renewcommand{\arraystretch}{2}
 \begin{table}[!t]
     \centering
      \resizebox{\textwidth}{!}{
     \begin{tabular}{c|ccc|ccc|ccc|ccc}
     \bhline
       \multirow{2}{*}{$m$} & \multicolumn{3}{c|}{$\alpha=1.0$} & \multicolumn{3}{c|}{$\alpha=0.5$} & \multicolumn{3}{c|}{$\alpha=0.2$} & \multicolumn{3}{c}{$\alpha=0.1$} \\
        \cline{2-4} \cline{5-7} \cline{8-10} \cline{11-13}
        & testing  & ours & baseline & testing & ours & baseline  & testing & ours & baseline  & testing & ours & baseline\\
       \hline
       \rowcolor{gray!20}  
       1 & \rate{100.0}{196/196}  & \rate{95.4}{187/196} & \rate{50.0}{98/196} &  \rate{100.0}{195/195} & \rate{93.3}{182/195} & \rate{0.0}{0/195} & \rate{96.3}{183/190}  & \rate{64.2}{122/190} & \rate{0.0}{0/190} & \rate{76.3}{74/97}  & \rate{6.2}{6/97} & \rate{0.0}{0/97} \\
       2 &  \rate{100.0}{192/192} &\rate{77.1}{148/192} & \rate{41.1}{79/192} &   \rate{100.0}{190/190} &\rate{78.9}{150/190} & \rate{0.0}{0/190} & \rate{96.7}{175/181}  &\rate{43.6}{79/181} & \rate{0.0}{0/181} & \rate{79.1}{72/91}  & \rate{3.3}{3/91} & \rate{0.0}{0/91} \\
       \rowcolor{gray!20}  
       3 & \rate{100.0}{157/157} & \rate{64.3}{101/157} & \rate{33.1}{52/157}        &   \rate{100.0}{157/157}  & \rate{64.3}{101/157} & \rate{0.0}{0/157} & \rate{96.0}{142/148} &\rate{30.4}{45/148} & \rate{0.0}{0/148} & \rate{73.9}{51/69}  &\rate{1.4}{1/69} & \rate{0.0}{0/69} \\
       4 & \rate{100.0}{126/126}  &\rate{51.6}{65/126} & \rate{31.0}{39/126}         &  \rate{100.0}{126/126} & \rate{46.8}{59/126} & \rate{0.0}{0/126} &   \rate{89.0}{105/115} &\rate{20.9}{24/115} & \rate{0.0}{0/115} &  \rate{67.9}{36/53} &\rate{1.9}{1/53} & \rate{0.0}{0/53} \\
       \bhline
     \end{tabular}
     }
      \caption{Statistical results on verified rate for overlapping local perturbations.}
     \label{tab:rate_local_pertur_corrupt}
     
\end{table}

\renewcommand{\arraystretch}{1.3} 
 \begin{table}[!t]
     \centering
      \resizebox{0.9\textwidth}{!}{
     \begin{tabular}{c|cc|cc|cc|cc}
     \bhline
       \multirow{2}{*}{$m$} & \multicolumn{2}{c|}{$\alpha=1.0$} & \multicolumn{2}{c|}{$\alpha=0.5$} & \multicolumn{2}{c|}{$\alpha=0.2$} & \multicolumn{2}{c}{$\alpha=0.1$} \\
        \cline{2-3} \cline{4-5} \cline{6-7} \cline{8-9}
       & ours & baseline & ours & baseline & ours & baseline & ours & baseline \\
       \hline
       \rowcolor{gray!20}  
       1 & \runtime{11.8}{7.4} & \runtime{7.6}{6.6} & \runtime{10.8}{5.2} & \runtime{13.0}{9.0} & \runtime{12.6}{16.1} & \runtime{20.4}{6.3} &  \runtime{9.1}{12.4} & \runtime{26.4}{6.4} \\
       2 & \runtime{17.0}{15.2} & \runtime{12.4}{9.9}& \runtime{18.6}{29.5} & \runtime{19.2}{12.0} & \runtime{16.1}{12.4} & \runtime{26.7}{7.4} & \runtime{12.3}{16.4} & \runtime{33.0}{11.8} \\
       \rowcolor{gray!20}  
       3 & \runtime{29.8}{71.0} & \runtime{17.2}{9.6} & \runtime{28.9}{79.2}& \runtime{22.5}{13.5} & \runtime{21.9}{17.9} & \runtime{29.8}{10.7} & \runtime{14.1}{18.1} & \runtime{35.6}{10.8} \\
       4 & \runtime{36.0}{73.6} & \runtime{21.5}{12.4} & \runtime{37.2}{77.4}& \runtime{28.0}{15.7} & \runtime{29.0}{23.5} & \runtime{34.7}{12.2} & \runtime{18.5}{23.3} & \runtime{39.2}{14.2} \\
       \bhline
     \end{tabular}
     }
      \caption{Statistical results on verification time for non-overlapping local perturbations (seconds).}
     \label{tab:time_local_perturb}
\end{table}

\renewcommand{\arraystretch}{1.2} 
 \begin{table}[!h]
     \centering
      \resizebox{\textwidth}{!}{
     \begin{tabular}{c|cccccc}
 \bhline
    $\alpha$ & overlapping & pruning & binary & integer & continuous & constraints \\
\xbhline
  \multirow{4}{*}{1.0} & \multirow{2}{*}{\ding{55}} & \ding{55} & \milpsize{9.42}{10^4}{0.00}{10^1} & \runtime{46.0}{0.0} & \milpsize{1.89}{10^5}{5.29}{10^2} & \milpsize{6.61}{10^5}{1.06}{10^3}\\
      & & \ding{51} &  \cellcolor{gray!20}\milpsize{5.31}{10^1}{4.29}{10^1} & \cellcolor{gray!20}\runtime{46.0}{0.0} & \cellcolor{gray!20}\milpsize{8.36}{10^2}{7.48}{10^2} & \cellcolor{gray!20}\milpsize{1.95}{10^3}{1.61}{10^3} \\
        \cline{2-7}
           & \multirow{2}{*}{\ding{51}} &  \ding{55}  &  \milpsize{9.42}{10^4}{0.00}{10^1}  & \runtime{46.0}{0.0} & \milpsize{1.92}{10^5}{6.64}{10^3} & \milpsize{6.67}{10^5}{1.33}{10^4} \\
      &  &  \ding{51} &\cellcolor{orange!20}  \milpsize{8.18}{10^2}{4.11}{10^3}&\cellcolor{orange!20} \runtime{46.0}{0.0} &\cellcolor{orange!20} \milpsize{5.07}{10^3}{1.42}{10^4} &\cellcolor{orange!20} \milpsize{1.27}{10^4}{4.06}{10^4} \\
\xbhline
     \multirow{4}{*}{0.5} & \multirow{2}{*}{\ding{55}}  & \ding{55}  & \milpsize{9.42}{10^4}{0.00}{10^1} & \runtime{46.0}{0.0} & \milpsize{1.89}{10^5}{4.41}{10^2} & \milpsize{6.61}{10^5}{8.97}{10^2}\\
      & & \ding{51} & \cellcolor{gray!20}\milpsize{4.64}{10^1}{1.60}{10^1} & \cellcolor{gray!20}\runtime{46.0}{0.0} & \cellcolor{gray!20}\milpsize{7.47}{10^2}{4.93}{10^2} & \cellcolor{gray!20}\milpsize{1.75}{10^3}{1.03}{10^3} \\
       \cline{2-7}
           & \multirow{2}{*}{\ding{51}} &  \ding{55}  & \milpsize{9.42}{10^4}{0.00}{10^1}  &   \runtime{46.0}{0.0} & \milpsize{1.92}{10^5}{6.78}{10^3} & \milpsize{6.67}{10^5}{1.35}{10^4} \\
      &  &  \ding{51} & \cellcolor{orange!20} \milpsize{1.03}{10^3}{4.98}{10^3} & \cellcolor{orange!20}  \runtime{46.0}{0.0} & \cellcolor{orange!20} \milpsize{6.04}{10^3}{1.74}{10^4}& \cellcolor{orange!20} \milpsize{1.53}{10^4}{4.96}{10^4} \\
\xbhline
          \multirow{4}{*}{0.2} & \multirow{2}{*}{\ding{55}} & \ding{55} & \milpsize{9.42}{10^4}{0.00}{10^1} & \runtime{46.0}{0.0} & \milpsize{1.89}{10^5}{5.08}{10^2} & \milpsize{6.60}{10^5}{1.02}{10^3}\\
      &  & \ding{51} & \cellcolor{gray!20}\milpsize{4.78}{10^1}{2.09}{10^1} & \cellcolor{gray!20}\runtime{46.0}{0.0} & \cellcolor{gray!20}\milpsize{7.89}{10^2}{5.37}{10^2} & \cellcolor{gray!20}\milpsize{1.84}{10^3}{1.13}{10^3} \\
      \cline{2-7}
            & \multirow{2}{*}{\ding{51}} &  \ding{55}  &  \milpsize{9.42}{10^4}{0.00}{10^1} & \runtime{46.0}{0.0} & \milpsize{1.92}{10^5}{4.70}{10^3} & \milpsize{6.66}{10^5}{9.43}{10^3} \\
      &  &  \ding{51} & \cellcolor{orange!20}\milpsize{9.45}{10^2}{3.10}{10^3}  & \cellcolor{orange!20}\runtime{46.0}{0.0}  & \cellcolor{orange!20}\milpsize{5.77}{10^3}{1.14}{10^4} & \cellcolor{orange!20}\milpsize{1.45}{10^4}{3.19}{10^4} \\
\xbhline
          \multirow{4}{*}{0.1} & \multirow{2}{*}{\ding{55}} & \ding{55} & \milpsize{9.42}{10^4}{0.00}{10^1} & \runtime{46.0}{0.0} & \milpsize{1.89}{10^5}{5.01}{10^2} & \milpsize{6.60}{10^5}{1.04}{10^3}\\
      & & \ding{51} & \cellcolor{gray!20}\milpsize{4.90}{10^1}{4.09}{10^1} & \cellcolor{gray!20}\runtime{46.0}{0.0} & \cellcolor{gray!20}\milpsize{7.64}{10^2}{6.75}{10^2} & 
      \cellcolor{gray!20}\milpsize{1.79}{10^3}{1.46}{10^3}\\
      \cline{2-7}
      & \multirow{2}{*}{\ding{51}} &  \ding{55}  &   \milpsize{9.42}{10^4}{0.00}{10^1} &  \runtime{46.0}{0.0}  &  \milpsize{1.92}{10^5}{6.33}{10^3} &   \milpsize{6.67}{10^5}{1.27}{10^4}  \\
      &  &  \ding{51} & \cellcolor{orange!20} \milpsize{1.95}{10^3}{7.19}{10^3} & \cellcolor{orange!20}\runtime{46.0}{0.0}  & \cellcolor{orange!20} \milpsize{9.07}{10^3}{2.49}{10^4} & \cellcolor{orange!20} \milpsize{2.41}{10^4}{7.12}{10^4} \\
\bhline
     \end{tabular}
     }
      \caption{Statistical results of the MILP size, including the number of binary, integer, and continuous variables, as well as the number of constraints. These results vary with the pose error threshold $\alpha$, the presence of object-airplane overlap, and the use of the pruning strategy. Results obtained with pruning are highlighted  in gray for non-overlapping images and in orange for overlapping images.}
     \label{tab:size_milp}
\end{table}

The verification runtimes on non-overlapping images increase  as $m$ grows, shown in Table~\ref{tab:time_local_perturb}, due to the expansion of the input set and corresponding reachable set. When holding $m$ constant, our method exhibits a decreasing trend in runtime as $\alpha$ decreases, while~\baseline\ shows the opposite. This leads to our method being slower at $\alpha = 1.0$, but faster at $\alpha = 0.1$. {\color{black}To explain this observation, Table~\ref{tab:size_milp} reports the MILP size statistics for our method when $m = 4$, including counts of binary, integer, and continuous variables, as well as constraints. For non-overlapping images, the MILP size remains relatively stable across different $\alpha$ values after applying pruning. The variation in runtime is instead attributed to MILP feasibility: smaller $\alpha$ leads to more feasible MILP problems, making it easier to quickly find a feasible solution, whereas larger $\alpha$ makes infeasibility proofs more computationally demanding.} In contrast,~\baseline\ models each post-pooling pixel as a distinct class, so decreasing $\alpha$ increases the number of output categories, leading to higher output dimensionality and longer runtimes.

\renewcommand{\arraystretch}{1.3} 
 \begin{table}[!h]
     \centering
      \resizebox{0.9\textwidth}{!}{
     \begin{tabular}{c|cc|cc|cc|cc}
     \bhline
       \multirow{2}{*}{$m$} & \multicolumn{2}{c|}{$\alpha=1.0$} & \multicolumn{2}{c|}{$\alpha=0.5$} & \multicolumn{2}{c|}{$\alpha=0.2$} & \multicolumn{2}{c}{$\alpha=0.1$} \\
        \cline{2-3} \cline{4-5} \cline{6-7} \cline{8-9}
       & ours & baseline & ours & baseline & ours & baseline & ours & baseline \\
       \hline
       \rowcolor{gray!20}  
       1 & \runtime{40.3}{95.9} & \runtime{16.3}{17.8} & \runtime{44.0}{94.0} & \runtime{20.9}{16.6} & \runtime{23.9}{39.8} & \runtime{29.3}{14.2} &  \runtime{29.3}{45.9} & \runtime{36.0}{19.5} \\
       2 & \runtime{113.0}{155.4} & \runtime{30.8}{27.3}& \runtime{105.9}{147.5} & \runtime{35.1}{23.6} & \runtime{76.9}{114.6} & \runtime{48.4}{37.0} & \runtime{54.8}{69.8} & \runtime{51.1}{40.2} \\
       \rowcolor{gray!20}  
       3 & \runtime{170.2}{175.1} & \runtime{48.3}{47.7} & \runtime{166.1}{173.6}& \runtime{56.6}{53.7} & \runtime{99.2}{131.0} & \runtime{65.6}{60.2} & \runtime{74.4}{83.7} & \runtime{64.2}{30.4} \\
       4 & \runtime{230.8}{190.0} & \runtime{66.2}{55.8} & \runtime{252.6}{194.0}& \runtime{77.9}{60.4} & \runtime{140.9}{138.5} & \runtime{ 84.9}{65.1} & \runtime{151.2}{210.3} & \runtime{104.4}{95.3} \\
       \bhline
     \end{tabular}
     }
      \caption{Statistical results on verification time for overlapping local perturbations (seconds).}
     \label{tab:time_local_perturb_corrupt}
\end{table}

{\color{black}Table~\ref{tab:time_local_perturb_corrupt} shows that both methods experience increased verification times on overlapping images relative to non-overlapping ones, with our method becoming slower than the baseline. This behavior stems from the significantly larger and more complex reachable sets in the overlapping case, which reduce the effectiveness of the pruning strategy and lead to considerably larger MILP instances. As indicated in Table~\ref{tab:size_milp}, although the original MILP sizes are comparable, pruning reduces the problem size by approximately three orders of magnitude for non-overlapping images, but only by about two orders for overlapping ones. Consequently, the pruned MILP for overlapping images remains roughly an order of magnitude larger than that for non-overlapping images.}

\paragraph{Global perturbations}
A similar pattern is seen for brightness and contrast perturbations, as shown in Tables~\ref{tab:rate_contrast}-\ref{tab:time_brightness}. The verified rates remain similar across different types of perturbations for the same $\alpha$ values, whether the perturbations are non-overlapping local occlusions or global changes like brightness and contrast. This indicates that the neural network exhibits a certain level of general robustness to diverse types of perturbations with moderate intensity.

\renewcommand{\arraystretch}{2}
 \begin{table}[!h]
     \centering
      \resizebox{\textwidth}{!}{
     \begin{tabular}{c|ccc|ccc|ccc|ccc}
     \bhline
       \multirow{2}{*}{$c$\%} & \multicolumn{3}{c|}{$\alpha=1.0$} & \multicolumn{3}{c|}{$\alpha=0.5$} & \multicolumn{3}{c|}{$\alpha=0.2$} & \multicolumn{3}{c}{$\alpha=0.1$} \\
        \cline{2-4} \cline{5-7} \cline{8-10} \cline{11-13}
        & testing  & ours & baseline & testing & ours & baseline  & testing & ours & baseline  & testing & ours & baseline\\
       \hline
       \rowcolor{gray!20}  
       0.05 & \rate{100.0}{200/200} & \rate{99.5}{199/200} & \rate{55.5}{111/200} &  \rate{100.0}{198/198} & \rate{100.0}{198/198} & \rate{0.0}{0/198} & \rate{100.0}{190/190} & \rate{75.3}{143/190} & \rate{0.0}{0/190} & \rate{94.8}{92/97} & \rate{11.3}{11/97} & \rate{0.0}{0/97} \\
       0.5 &\rate{100.0}{200/200}  & \rate{99.5}{199/200} & \rate{55.0}{110/200} & \rate{100.0}{198/198} & \rate{100.0}{198/198} & \rate{0.0}{0/198} & \rate{99.5}{189/190} & \rate{72.1}{137/190} & \rate{0.0}{0/190} &  \rate{91.8}{89/97}& \rate{10.3}{10/97} & \rate{0.0}{0/97} \\
       \rowcolor{gray!20}  
       1 & \rate{100.0}{200/200} &\rate{99.5}{199/200} & \rate{54.0}{108/200}          & \rate{100.0}{198/198} & \rate{100.0}{198/198} & \rate{0.0}{0/198} & \rate{98.4}{187/190} & \rate{64.7}{123/190} & \rate{0.0}{0/190} & \rate{84.5}{82/97}  &\rate{8.2}{8/97} & \rate{0.0}{0/97} \\
       \bhline
     \end{tabular}
     }
      \caption{Statistical results on verified rate for contrast.}
     \label{tab:rate_contrast}
     
\end{table}

\renewcommand{\arraystretch}{1.3} 
 \begin{table}[!h]
     \centering
      \resizebox{0.9\textwidth}{!}{
     \begin{tabular}{c|cc|cc|cc|cc}
     \bhline
       \multirow{2}{*}{$c$\%} & \multicolumn{2}{c|}{$\alpha=1.0$} & \multicolumn{2}{c|}{$\alpha=0.5$} & \multicolumn{2}{c|}{$\alpha=0.2$} & \multicolumn{2}{c}{$\alpha=0.1$} \\
        \cline{2-3} \cline{4-5} \cline{6-7} \cline{8-9}
       & ours & baseline & ours & baseline & ours & baseline & ours & baseline \\
       \hline
       \rowcolor{gray!20}  
       0.05 & \runtime{19.0}{2.8} & \runtime{16.1}{3.4} & \runtime{19.4}{2.7} & \runtime{123.3}{9.6} & \runtime{20.1}{3.6} & \runtime{29.1}{3.7} & \runtime{23.9}{3.5} & \runtime{34.7}{3.2} \\
       0.5 & \runtime{22.9}{2.9} & \runtime{19.3}{3.4}& \runtime{23.3}{2.9} & \runtime{26.9}{9.3} & \runtime{24.1}{3.7} & \runtime{33.0}{3.8} & \runtime{28.0}{3.4} & \runtime{38.9}{3.0} \\
       \rowcolor{gray!20}  
       1 & \runtime{48.1}{9.4} & \runtime{45.0}{10.3} & \runtime{49.0}{9.7}& \runtime{53.3}{16.0} & \runtime{49.7}{9.8} & \runtime{58.9}{9.9} & \runtime{54.9}{9.8} & \runtime{65.6}{8.8} \\
       \bhline
     \end{tabular}
     }
      \caption{Statistical results on verification time for contrast (seconds).}
     \label{tab:time_contrast}
\end{table}

\renewcommand{\arraystretch}{2}
 \begin{table}[!h]
     \centering
      \resizebox{\textwidth}{!}{
     \begin{tabular}{c|ccc|ccc|ccc|ccc}
     \bhline
       \multirow{2}{*}{$b$} & \multicolumn{3}{c|}{$\alpha=1.0$} & \multicolumn{3}{c|}{$\alpha=0.5$} & \multicolumn{3}{c|}{$\alpha=0.2$} & \multicolumn{3}{c}{$\alpha=0.1$} \\
        \cline{2-4} \cline{5-7} \cline{8-10} \cline{11-13}
       & testing  & ours & baseline & testing & ours & baseline  & testing & ours & baseline  & testing & ours & baseline\\
       \hline
       \rowcolor{gray!20}  
       1 & \rate{100.0}{200/200} & \rate{99.5}{199/200} & \rate{55.5}{111/200} 
       &   \rate{100.0}{198/198} & \rate{100.0}{198/198} & \rate{0.0}{0/198} 
       &  \rate{100.0}{190/190}  & \rate{72.1}{137/190} & \rate{0.0}{0/190}   
       &  \rate{93.8}{91/97} & \rate{10.3}{10/97} & \rate{0.0}{0/97} \\
       2 & \rate{100.0}{200/200} &  \rate{99.0}{198/200} & \rate{54.5}{109/200} 
       &   \rate{100.0}{198/198} &  \rate{100.0}{198/198} & \rate{0.0}{0/198} 
       &  \rate{100.0}{190/190} &  \rate{63.7}{121/190} & \rate{0.0}{0/190}  
       &  \rate{88.6}{86/97} & \rate{7.2}{7/97} & \rate{0.0}{0/97} \\
       \bhline
     \end{tabular}
     }
      \caption{Statistical results on verified rate for brightness.}
     \label{tab:rate_brightness}
\end{table}

\renewcommand{\arraystretch}{1.3} 
 \begin{table}[!h]
     \centering
      \resizebox{0.9\textwidth}{!}{
     \begin{tabular}{c|cc|cc|cc|cc}
     \bhline
       \multirow{2}{*}{$b$} & \multicolumn{2}{c|}{$\alpha=1.0$} & \multicolumn{2}{c|}{$\alpha=0.5$} & \multicolumn{2}{c|}{$\alpha=0.2$} & \multicolumn{2}{c}{$\alpha=0.1$} \\
        \cline{2-3} \cline{4-5} \cline{6-7} \cline{8-9}
       & ours & baseline & ours & baseline & ours & baseline & ours & baseline \\
       \hline
       \rowcolor{gray!20}  
       1 & \runtime{33.4}{5.5} & \runtime{29.0}{5.6} & \runtime{33.9}{5.5} & \runtime{35.2}{9.8} & \runtime{34.3}{5.9} & \runtime{43.4}{6.4} & \runtime{39.6}{11.9} & \runtime{48.9}{5.4} \\
       2 & \runtime{65.6}{13.4} & \runtime{58.3}{11.2}& \runtime{67.6}{16.9} & \runtime{65.0}{15.1} & \runtime{66.2}{13.3} & \runtime{75.8}{11.6} & \runtime{72.1}{12.8} & \runtime{77.9}{11.0} \\
       \bhline
     \end{tabular}
     }
      \caption{Statistical results on verification time for brightness (seconds).}
     \label{tab:time_brightness}
\end{table}








\section{Conclusion}

We propose a verification method that combines reachability analysis with MILP for learning-based keypoint detection, allowing the output specification to capture interdependencies among keypoints. Our approach encodes the coupling between keypoints directly, without decoupling, and is applicable to general heatmap-based keypoint detection models. Experimental results show that our coupled method achieves significantly higher verified rates compared to the decoupled approach.

A  limitation of the current framework is that, under strict output specifications, there remains a noticeable gap between the verified robustness rates and the empirical robustness observed through testing-based evaluation. This gap stems from the conservativeness of reachable set over-approximations. Future work will focus on developing tighter reachability approximations and scalable verification strategies that can further reduce this gap and extend the applicability of the proposed method to larger and more complex keypoint detection networks.

\bibliography{neus}
\appendix

\section{Verification of keypoint detection}\label{app:example}
\subsection{A case study}\label{sec:example}

\begin{figure}[!t]
    \centering
     \begin{minipage}{\textwidth}
    \centering
    \includegraphics[width=\textwidth]{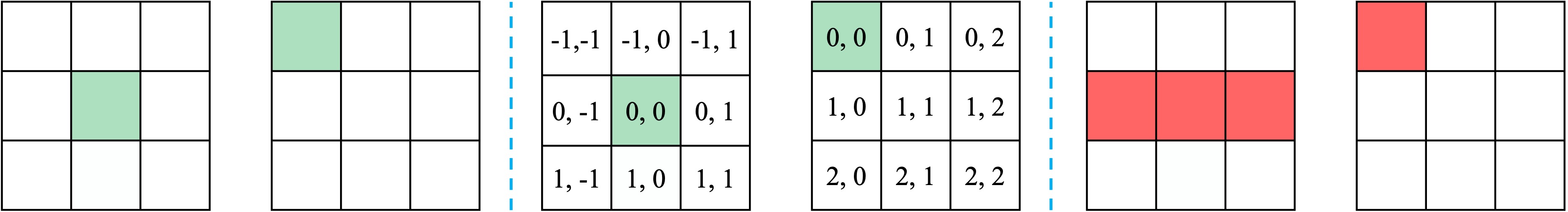}
    \caption*{\textit{(a)} \hspace{13.5em} \textit{(b)} \hspace{13.5em} \textit{(c)}}
  \end{minipage}
       \begin{minipage}{\textwidth}
    \centering
    \includegraphics[width=\textwidth]{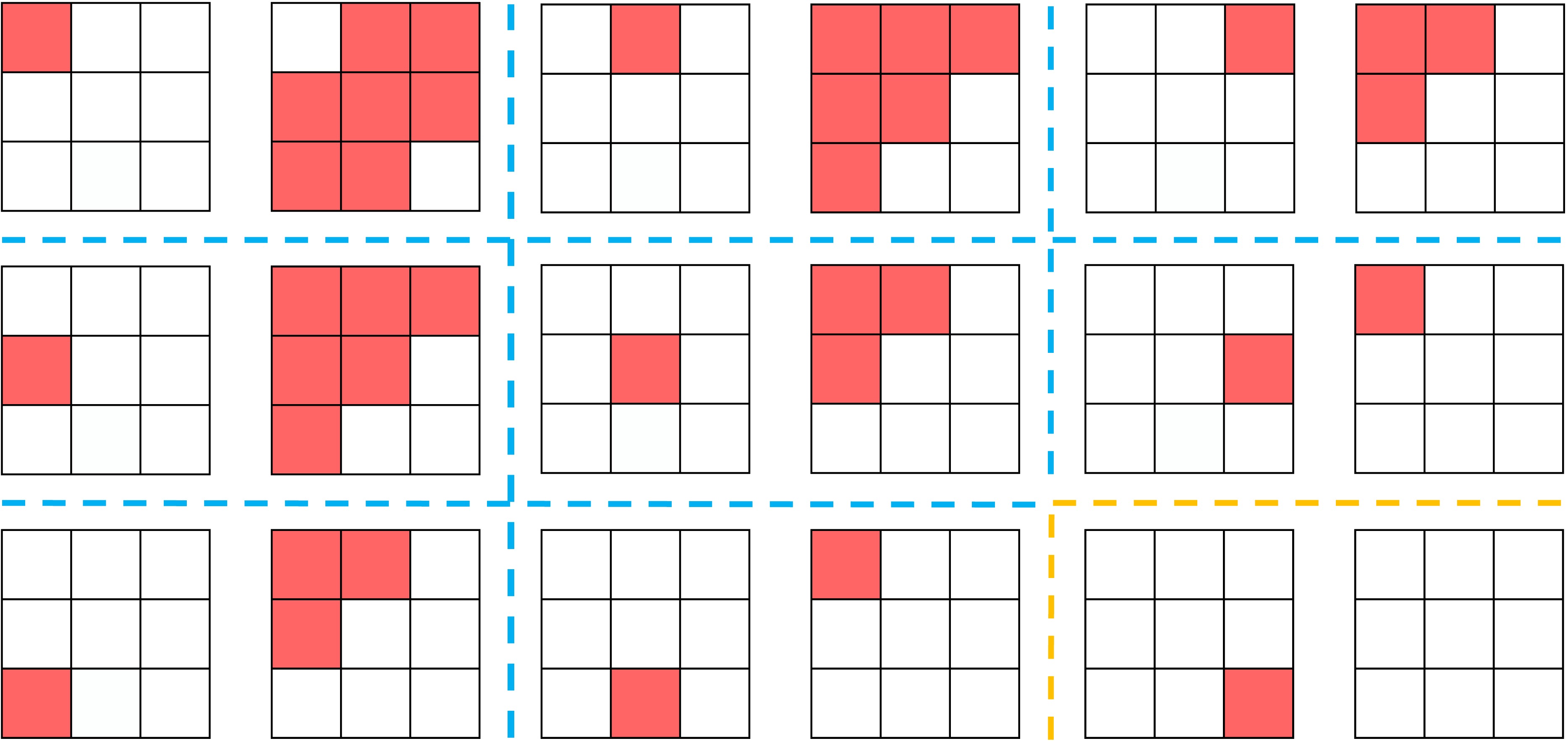}
    \caption*{\textit{(d)}}
  \end{minipage}
  
    \caption{(a) Ground-truth keypoints are shown in green.
(b) Deviations from the ground-truth keypoints are illustrated in both the vertical (first element) and horizontal (second element) directions. (c) Decoupled keypoint errors as determined by the method in~\baseline. (d) Eight scenarios of coupled keypoint errors are evaluated using our approach, where the error of the first keypoint is fixed and the feasible error combinations for the second keypoint are enumerated. For example, the top-left case shows the feasible $\delta h_2$ and $\delta w_2$ values when $\delta h_1 = \delta w_1 = -1$. The bottom-right case illustrates cases where keypoint errors fall outside the polytope. Specifically, when the first keypoint deviates from the ground truth by 1 pixel in both directions, no feasible solution exists, even if the second keypoint aligns perfectly with the ground truth. Note that (c) and (d) are best interpreted with reference to (b) to evaluate satisfaction of the constraints in~\eqref{eq:example_keypoint_polytope}.}
    \label{fig:coupled}
 \end{figure}

\begin{myexample}\label{exmp:comparison}
We illustrate the advantages of our approach using a simple example with a $3 \times 3$ input image and two keypoints. In contrast to the ``decoupled'' method in~\cite{luo2025certifying}, which enforces independence among keypoint errors by disentangling them, our method preserves their interdependence.   Assume the ground truth positions of the keypoints are as follows: the first keypoint is located at the center of the image, while the second keypoint is at the top-left corner (see Fig.~\ref{fig:coupled}(a)). Let $\delta h_i$ and $\delta w_i$, for $i=1, 2$, represent the errors in the vertical and horizontal directions for the keypoints. The polytope $\delta\ccalV$ representing the keypoint error bound is defined as:
\begin{subequations}\label{eq:example_keypoint_polytope}
\begin{align}
\delta h_1 + \delta w_1 + \delta h_2 + \delta w_2 & \leq 1, \\
\delta h_1 + \delta w_1 + \delta h_2 + \delta w_2 & \geq -1.
\end{align}
\end{subequations}

Fig.~\ref{fig:coupled}(c) illustrates the independent keypoint error model from~\baseline, where the allowable deviations are represented as red rectangles centered at the ground-truth positions, that is, \[\delta h_1 = 0,  -1 \leq \delta w_1 \leq 1, \delta h_2 = 0, \delta w_2 = 0.\]
Under this method, the first keypoint is permitted to deviate by one pixel in the horizontal direction but none vertically, while the second keypoint is not allowed any deviation. In contrast, our coupled approach leverages the entire feasible region within the polytope~\eqref{eq:example_keypoint_polytope}, as illustrated by the irregular geometry in Fig.~\ref{fig:coupled}(d). Clearly, the decoupled method in~\cite{luo2025certifying} is overly conservative, as it excludes a significant portion of feasible cases inside $\delta\ccalV$. The bottom-right section of Figure~\ref{fig:coupled}(d) illustrates cases where the keypoint errors fall outside the allowable polytope $\delta\ccalV$ (e.g., $\delta h_1 = 1$, $\delta w_1 = 1$). 
\end{myexample}

\subsection{MILP formulation of the example}\label{sec:example_milp}
\begin{cexample}{exmp:comparison}
We analyze a reachable set defined by a center and a single generator, considering two scenarios where the MILP is infeasible in the first scenario and feasible in the second. In the first scenario, the center $\bbC$ and generator $\bbG$ are depicted in Fig.~\ref{fig:indicators}(a) and Fig.~\ref{fig:indicators}(b), respectively. For any point in the reachable set $\bbC + \alpha \bbG$ with $\alpha$ ranging from -1 to 1, the minimum values at the ground-truth pixels in each heatmap are higher than the maximum values at all other pixels. Consequently, the predicted keypoints align perfectly with the ground truth (as shown in Fig.~\ref{fig:indicators}(c)). Therefore, the MILP is infeasible because there are no out-of-bound locations whose values exceed those of the in-bound locations. In the second scenario, using a different generator, the maximum values in the heatmaps occur at the bottom-right pixels when $\alpha = 1$ (see Fig.~\ref{fig:indicators}(f)). This results in keypoint errors of $\delta h_1 = 1, \delta w_1 =1$, and $\delta h_2 = 2, \delta w_2 = 2$ (refer to Fig.~\ref{fig:coupled}(b)), which violate the constraints defined by~\eqref{eq:example_keypoint_polytope}. As a result, a counterexample is identified, where keypoint errors fall outside the polytope $\delta \ccalV$. We present the MILP formulation for the first scenario. First, the zonotope in~\eqref{eq:zonotope} is expressed as
\begin{equation}
\begin{aligned}
    \bbZ & = \begin{bmatrix}
        -5.0 & -5.0 & -5.0 & -5.0 & 0.0 & -5.0 & -5.0 & -5.0 & -5.0 \\
        0.1 & -5.0 & -5.0 & -5.0 & -5.0 & -5.0 & -5.0 & -5.0 & -5.0
            \end{bmatrix}^\mathsf{T}  \\
       &\quad + \alpha_1 
        \begin{bmatrix}
        -0.1 & -0.1 & -0.1 & -0.1 & 2.0 & -0.1 & -0.1 & -0.1 & -0.1\\
        1.0 & -0.1 & -0.1 & -0.1 & -0.1 & -0.1 & -0.1 & -0.1 & -0.1
            \end{bmatrix}^\mathsf{T}, \quad \alpha_1 \in [-1, 1].
\end{aligned}
\end{equation}
Let \(\delta \bbv = [\delta v_1, \delta v_2, \delta v_3, \delta v_4]\). Given that the heatmap has dimensions $H = 3$ and $W = 3$, and the ground-truth coordinates are $(v^*_1, v^*_2) = (2, 2)$ and $(v^*_3, v^*_4) = (1, 1)$, the boundary condition for \(\delta \bbv\) in constraint~\eqref{eq:image_bound} can be expressed as
\begin{subequations}
\begin{alignat}{5}
-1 &\;\leq\;  \delta v_1 &&\;\leq\; 1, \quad & -1  &&\;\leq\;  \delta v_2 &&\;\leq\; 1, \\
0 &\;\leq\;  \delta v_3 &&\;\leq\; 2, \quad &  0 && \;\leq\;  \delta v_4 &&\;\leq\; 2.
\end{alignat}
\end{subequations}


\begin{figure}[t]
  \centering
  \begin{minipage}{\textwidth}
    \centering
    \includegraphics[width=\textwidth]{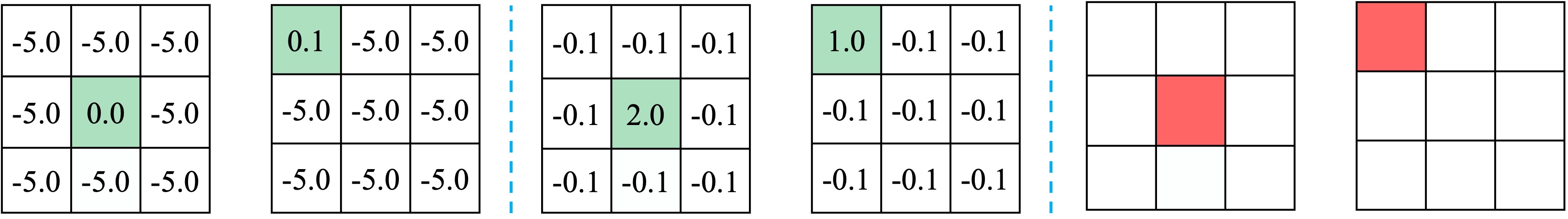}
    \caption*{\textit{(a)} center \hspace{10em} \textit{(b)} generator \hspace{8em} \textit{(c)} prediction}
  \end{minipage}
  \begin{minipage}{\textwidth}
    \centering
    \includegraphics[width=\textwidth]{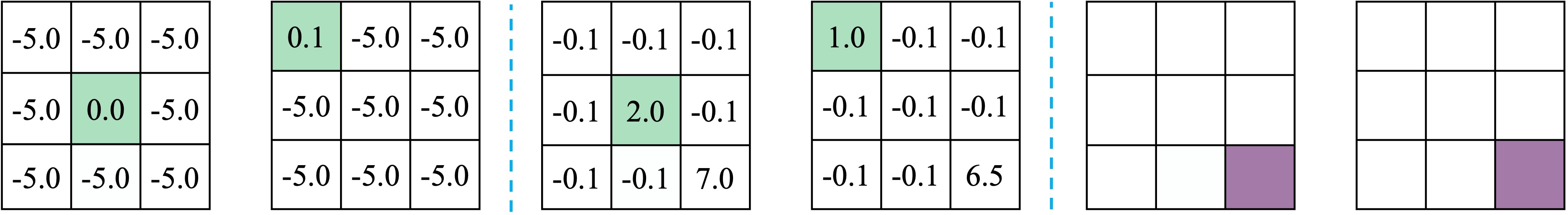}
    \caption*{\textit{(d)} center \hspace{12em} \textit{(e)} generator  \hspace{8em} \textit{(f)} counter-example \hspace{-2em}}
  \end{minipage}
   \caption{In the first scenario (a)-(c), the MILP is infeasible. Conversely, in the second scenario (d)-(f), the MILP is feasible, resulting in the identification of a counterexample.}
\label{fig:indicators}
\end{figure}

Let \(\bbr = [r_1, r_2]\) and $\epsilon = 1 \times 10^{-6}$. The element-wise maximum values of the two entries in the vector \(-\bbP_\bbv \delta \bbv + b_\bbv + \bbepsilon\) are 3.000001 and 7.000001, respectively. Accordingly, the out-of-polytope condition~\eqref{eq:polytope_containment} for \(\delta \ccalV\) can be written as
\begin{subequations}
\begin{align}
    r_1 + r_2 & \geq 1, \\
    \begin{bmatrix}
        1 & 1 & 1 & 1\\
        -1 & -1 & -1 & -1
    \end{bmatrix} \delta \bbv & \geq
   \begin{bmatrix}
       -2 + 3.000001 r_1 \\
       -6 + 7.000001 r_2 
   \end{bmatrix}.
\end{align}
\end{subequations}
The perturbed coordinates are $(h_1, w_1) = (2 + \delta v_1, 2 + \delta v_2)$ and $(h_2, w_2) = (1 + \delta v_3, 1 + \delta v_4)$. Let \(\ccalS = \{1, \ldots, 9\}\) and \(\bbDelta \in \{0, 1\}^{9\times 2}\). The binary indicator constraint~\eqref{eq:delta} can then be written as
\begin{subequations}
\begin{alignat}{3}
1 & =  \sum_{j \in \ccalS} \Delta_{j1}, \quad && \;  3 \delta v_1 + \delta v_2 + 5  &= \sum_{j \in \ccalS} j \Delta_{j1}, \\
1 & =  \sum_{j \in \ccalS} \Delta_{j2}, \quad && \;  3 \delta v_3 + \delta v_4 + 1 & = \sum_{j \in \ccalS} j \Delta_{j2}.
\end{alignat}
\end{subequations}
The element-wise lower and upper bounds for the points contained in the zonotope \(\ccalZ\) are
\begin{align*}
 \underline{\bbM} & =
    \begin{bmatrix}
        -5.1 & -5.1 & -5.1 & -5.1 & -2.0 & -5.1 & -5.1 & -5.1 & -5.1 \\
        -0.9 & -5.1 & -5.1 & -5.1 & -5.1 & -5.1 & -5.1 & -5.1 & -5.1
    \end{bmatrix}^\mathsf{T},   \\
    \overline{\bbM} & = 
    \begin{bmatrix}
    -4.9 & -4.9 & -4.9 & -4.9 & 2.0 & -4.9 & -4.9 & -4.9 & -4.9\\
    1.1 & -4.9 & -4.9 & -4.9 & -4.9 & -4.9 & -4.9 & -4.9 & -4.9
\end{bmatrix}^\mathsf{T}.
\end{align*}
Therefore, constraint~\eqref{eq:z_delta} becomes
\begin{subequations}
\begin{align}
    \hhatbbZ & \geq \underline{\bbM} \odot \bbDelta, \\
    \hhatbbZ & \leq \overline{\bbM} \odot \bbDelta, \\
    \hhatbbZ & \leq \bbZ - \underline{\bbM}  \odot (\mathbf{1} - \bbDelta), \\
    \hhatbbZ & \geq \bbZ  - \overline{\bbM}  \odot ( \mathbf{1}  - \bbDelta).
\end{align}
\end{subequations}
where $\odot$ denotes element-wise multiplication. Let \(\bbz = [z_1, z_2]\). Then, the summation constraint~\eqref{eq:sum} can be expressed as
\begin{align}
    z_1 = \sum_{j \in \ccalS} \hhatZ_{j1}, \quad 
    z_2 = \sum_{j \in \ccalS} \hhatZ_{j2}.
\end{align}
The projection of \(\delta\ccalV\) onto the first keypoint yields $\{(-1, -1),\, (-1, 0),\, (-1, 1),\, (0, -1),\, (0, 0),\, (0, 1),$ $(1, -1),\, (1, 0)\}$.
Given the ground-truth coordinate $(2, 2)$, this corresponds to the 2D coordinates $\{(1, 1),$ $(1, 2),\, (1, 3),\, (2, 1),\, (2, 2),\, (2, 3),\, (3, 1),\, (3, 2)\}$.
The corresponding flattened in-bound indices for the first keypoint are \(\ccalS_{\mathrm{in}}^1 = \{1, \ldots, 8\}\). Similarly, \(\ccalS_{\mathrm{in}}^2 = \{1, \ldots, 8\}\).
Therefore, the counterexample constraint~\eqref{eq:element-wise-larger} becomes
\begin{subequations}
\begin{align}
    z_1 & \geq Z_{j1}, \quad j \in \ccalS_{\mathrm{in}}^1, \\
    z_2 & \geq Z_{j2}, \quad j \in \ccalS_{\mathrm{in}}^2.
\end{align}
\end{subequations}
For the first keypoint, the lower bound (-2.0) of the value at index 5 in the first column of $\underline{\bbM}$ exceeds the upper bound (-4.9) of all other values in the first column of $\overline{\bbM}$, giving $\ccalS^{1,-}_{\mathrm{in}} = \{1, 2, 3, 4, 6, 7, 8\}$. Consequently,  $\ccalS^{1,*}_{\mathrm{in}} = \ccalS^{1}_{\mathrm{in}}\setminus \ccalS^{1,-}_{\mathrm{in}} = \{5\}$. For the set of out-of-bound indices, we have $\ccalS^{1}_{\mathrm{out}} = \ccalS \setminus \ccalS^{1,-}_{\mathrm{in}} = \{5, 9\}$,  $\ccalS^{1,-}_{\mathrm{out}} = \{9\}$. Hence, $\ccalS^{1,*}_{\mathrm{out}} = \ccalS^{1}_{\mathrm{out}}\setminus \ccalS^{1,-}_{\mathrm{out}}=  \{5\}$. Similarly, for the second keypoint, $\ccalS^{2,*}_{\mathrm{in}} = \{1\}, \ccalS^{2,*}_{\mathrm{out}} = \{1\}$. We found that
\begin{align*}
    \ccalS^{1,*}_{\mathrm{in}} = \ccalS^{1,*}_{\mathrm{out}} = \{5\}, \quad  \ccalS^{2,*}_{\mathrm{in}} = \ccalS^{2,*}_{\mathrm{out}} = \{1\}
\end{align*}
which correspond exactly to the locations of the ground-truth pixels. In other words, for each keypoint, the pruned set of  out-of-bound indices coincides with the  pruned set of in-bound indices. This renders the formulation for the scenario infeasible, since an index cannot simultaneously be both in-bound and out-of-bound.

Following the same logic, for the second scenario in Figure~\ref{fig:indicators}, we obtain
\begin{align*}
    \ccalS^{1,*}_{\mathrm{in}} = \{5\}, \quad  \ccalS^{1,*}_{\mathrm{out}} = \{5, 9\}, \quad  \ccalS^{2,*}_{\mathrm{in}} = \{1\}, \quad  \ccalS^{2,*}_{\mathrm{out}} = \{1, 9\}.
\end{align*}
In this case, the MILP formulation has a feasible solution with $\alpha_1 = 1.0, \delta \bbv  = [1, 1, 2, 2], \bbr = [1, 0], \Delta_{91} = \Delta_{92}=1, z_1 = Z_{91}, z_2 = Z_{92}$.







\end{cexample}



\end{document}